%% file: main.tex
\documentclass{article}
\usepackage{array}
\usepackage{multirow}
\usepackage{makecell}
\usepackage{relsize}
\usepackage{physics}
\usepackage{amsfonts}
\usepackage{amsmath,amssymb}
\usepackage{amsthm}
\usepackage{natbib}
\usepackage{enumitem}
\usepackage{mathrsfs}
\usepackage{listings}
\usepackage[english]{babel}

\usepackage{tikz}
\usetikzlibrary{decorations.pathreplacing}
\usepackage{bbm}

\def\argmin{\mathop{\!\arg\!\min}}

\newcommand{\M}{\mathcal{M}_n^\uparrow}

\usepackage[english]{babel}
\usepackage{booktabs}
\usepackage{nicematrix}

\newcommand{\ind}[1]{{{\mathbf{1}_{\left\{#1\right\}}}}}
\newcommand{\reals}{{\mathbb{R}}}
\newcommand{\R}{\reals}
\usepackage[letterpaper,top=2cm,bottom=2cm,left=2.5cm,right=2.5cm,marginparwidth=1.75cm]{geometry}

\usepackage{amsmath}
\usepackage{graphicx}
\usepackage[colorlinks=true, allcolors=blue]{hyperref}
\usepackage{bbm}
\usepackage{tikz}

\definecolor{plum}  {rgb}{.4,0,.4}
\definecolor{forest}  {rgb}{0,.6,0}
\definecolor{midnight}  {rgb}{0,0,.5}

\usepackage{bm}
\usepackage{subcaption}
\usepackage{graphicx}
\usepackage[stable]{footmisc}
\usepackage{setspace}
\usepackage[toc,page]{appendix}

\usepackage{titlesec}
\titleformat{\paragraph}[runin]
{\normalfont\bfseries}{}{1em}{}[:]
\titlespacing*{\paragraph}{0pt}{6pt}{3pt}

\usepackage[capitalise]{cleveref}
\crefformat{appendix}{\S#2#1#3}

\newtheorem{theorem}{Theorem}
\theoremstyle{definition}

\newtheorem{lemma}[theorem]{Lemma}
\newtheorem{proposition}[theorem]{Proposition}

\newtheorem{remark}{Remark}

\usepackage{xspace}

\usepackage{xcolor}

\newcommand{\EE}{\mathbb{E}}
\newcommand{\PP}{\mathbb{P}}

\usepackage{mathtools} % for \mathclap if you want it later
\usepackage{xcolor}

\newif\iffinal
\iffinal
    \newcommand{\todo}[1]{}
    \newcommand{\raphael}[1]{}
\else
    \newcommand{\todo}[1]{{\textcolor{red}{[to do: #1]}}}
    \newcommand{\raphael}[1]{{\color{cyan}Raphael: #1}}
\fi

\title{An analysis of binary isotonic regression: \\ degrees of freedom and implications for calibration}

\usepackage[blocks, affil-it]{authblk}

\author[1]{Raphael Rossellini}
\author[1]{Rina Foygel Barber}
\author[2]{Zhimei Ren}
\author[3]{Jake A. Soloff}

\affil[1]{Department of Statistics, University of Chicago} 
\affil[2]{Department of Statistics and Data Science, University of Pennsylvania}
\affil[3]{Department of Statistics, University of Michigan}

\date{\today}

\begin{document}

\maketitle

\begin{abstract}
\input{src/abstract}
\end{abstract}

\input{src/introduction}

\input{src/dof_result}

\input{src/calibration_consequences}

\input{src/discussion}

\input{src/acknowledgement}

\bibliographystyle{apalike}
\bibliography{refs.bib}

\appendix

\input{src/appendix}

\input{src/error_term_certificate}

\end{document}

%% file: src/abstract.tex
    Isotonic regression is a canonical tool for estimating monotone functions and calibrating probabilistic predictors. We provide a fully sharp finite-sample characterization of its worst-case degrees of freedom on binary samples. Specifically, we identify the binary sequences that maximize the number of distinct fitted values produced by isotonic regression. We develop a sharp bound on the degrees of freedom with a leading term of $\frac{3}{(4\pi^2)^{1/3}} n^{2/3}$ using analytic number theory, improving on previous bounds.

We then apply this result to calibration. Calibration is a central requirement for probabilistic prediction, and isotonic regression is a widely used post-processing method for improving calibration.  Building on deterministic degrees-of-freedom bounds, we derive, to our knowledge, the first nontrivial distribution-free guarantee on the Expected Calibration Error (ECE) of isotonic regression. This ECE bound is fully model-free and distribution-free, only assuming $Y \in \{0,1\}$.

%% file: src/introduction.tex
\section{Introduction}

Isotonic regression is a widely used method for fitting a monotone (non-decreasing) function to data. An important special case is when $Y$ is a binary variable, in contexts such as heterogeneous treatment effect estimation, modeling disease risk, and calibrating binary classifiers \citep{van2023causal, repec:bep:mchbio:1037, zadrozny2002transforming}. 

A key quantity for theoretical analyses of isotonic regression on finite data is the \emph{degrees of freedom}: the number of unique fitted values produced. A small number of fitted levels acts as a complexity measure for isotonic regression: even though the estimator is fit nonparametrically, its output may take on far fewer distinct values than the sample size. Thus, bounds on the degrees of freedom may make finite-sample guarantees possible without imposing smoothness, noise, or well-specification assumptions.

However, isotonic regression in general may have degrees of freedom that is exactly $n$, if we do not place additional assumptions on the properties of the data. (For example, suppose $Y=f(X)$, where $f: \mathcal X \to \mathcal Y$ is a strictly increasing function and $X$ is drawn from a continuous distribution; then fitting isotonic regression to $\{(X_i, Y_i) \}_{i=1}^n$ will yield $\hat f(X_i) = Y_i$ for each $i = 1,\dots,n$.) To date, most sublinear bounds on the degrees of freedom for isotonic regression have proceeded by placing assumptions on the presence of additive noise or the lack of signal. For example, \citet{meyer2000degrees} assume that there is additive noise which is homoskedastic and Gaussian with a fixed scale, and establish that the degrees of freedom is in expectation $O(n^{1/3})$. In contrast, \citet{andersen1954fluctuations} demonstrates that the expected degrees of freedom is $O(\log n)$ when there is no signal (i.e., $f(x) \equiv 0$). 

Recently, \citet{dimitriadis2023honest} established a result of a very different flavor, proving that the degrees of freedom for isotonic regression is upper-bounded by $3n^{2/3}$ under the single assumption that $Y$ is binary. This result requires no assumptions on the distribution of $Y$ or the relationship between $Y$ and $X$. Unlike previous bounds, theirs is fully deterministic, holding for any realization of $\{ (X_i, Y_i)\}_{i=1}^n$.

Our first contribution sharpens this binary $Y$ result to its optimal form. Leveraging tools from analytic number theory, we demonstrate that our construction yields an upper bound with a leading constant of $\frac{3}{(4\pi^2)^{1/3}}$, which improves upon the leading constant of 3 in \citet{dimitriadis2023honest} and identifies the best leading constant possible.

Our second contribution applies this bound on the number of fitted levels to calibration of probabilistic classifiers. Isotonic regression is widely used as a post-processing method for binary forecasts, but questions have remained over the possibility of out-of-sample calibration guarantees. Our deterministic bound on the number of fitted levels gives a direct route to such a guarantee: for an independent test point, isotonic calibration has Expected Calibration Error that is
$
O_p(n^{-1/6}(\log n)^{1/2}),
$ 
under no assumptions on the joint distribution of $(X,Y)$ beyond $Y\in\{0,1\}$.  To our knowledge, this is the first non-trivial distribution-free bound on the expected calibration error of isotonic regression.\footnote{The authors thank Aaditya Ramdas for making them aware of this open problem.}

%% file: src/dof_result.tex
\section{Degrees of freedom result}

We study the isotonic least squares estimator
\[
\mathrm{iso}(y) := \argmin_{\theta\in \M}\|y-\theta\|^2_2,
\]
where $\|\cdot\|$ denotes the Euclidean norm, and $\M:= \{\theta\in \mathbb{R}^n : \theta_1\leq \dots \leq \theta_n\}$ is the monotone cone of non-decreasing sequences of length~$n$.

Our main result uniformly bounds the number of unique values $\hat{k}(y)$ of $\mathrm{iso}(y)$ over all length $n$ binary sequences $y\in \{0,1\}^n$.

\begin{theorem}\label{thm:main}
    The number of unique fitted values $\hat k(y)$ for $\textnormal{iso}(y)$ satisfies
\begin{align}\label{eq:thm_main_upper}
         \sup_{y \in \{0,1 \}^n}
        \hat{k}(y) = \frac{3}{(4\pi^2)^{1/3}} n^{2/3} + \delta_n,
    \end{align}
    where $|\delta_n| = O(n^{1/3} \log n)$ and is specified exactly in the proof via \cref{eqn:tight_df_bound}.
\end{theorem}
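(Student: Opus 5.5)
The plan is to reduce the problem to a number-theoretic extremal problem about distinct rationals, and then solve that problem with classical asymptotics for Euler's totient function.

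\textbf{Structure of the fit.} Recall that $\mathrm{iso}(y)$ is piecewise constant on a partition of $\{1,\dots,n\}$ into consecutive blocks $B_1,\dots,B_k$. On each block the fitted value is the block mean $\bar y_{B_j}$, and the block means are strictly increasing in $j$. Since $y\in\{0,1\}^n$, each block mean equals $a_j/|B_j|$ with $a_j$ the number of ones in the block. Write this mean in lowest terms as $p_j/q_j$. Then $q_j\le |B_j|$, and the $k$ fractions $p_j/q_j\in[0,1]$ are pairwise distinct. Hence
\[
n=\sum_j |B_j|\;\ge\;\min\Big\{\textstyle\sum_{j=1}^k q_j:\ p_1/q_1<\dots<p_k/q_k \text{ distinct reduced fractions in }[0,1]\Big\}.
\]
The number of reduced fractions in $[0,1]$ with denominator exactly $q$ is $\varphi(q)$ for $q\ge2$, and is $2$ for $q=1$. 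So the minimum is achieved greedily: take all fractions of the Farey sequence $F_Q$, then some fractions of denominator $Q+1$.

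\textbf{Upper bound.} Define $S(Q)=1+\sum_{q\le Q} q\varphi(q)$ and $N(Q)=1+\sum_{q\le Q}\varphi(q)$. Let $Q$ be maximal with $S(Q)\le n$. The reduction above then gives
\[
\hat k(y)\le N(Q)+\big\lfloor (n-S(Q))/(Q+1)\big\rfloor .
\]
I will use the classical estimates $\sum_{q\le Q}\varphi(q)=\tfrac{3}{\pi^2}Q^2+O(Q\log Q)$ and, by partial summation, $\sum_{q\le Q}q\varphi(q)=\tfrac{2}{\pi^2}Q^3+O(Q^2\log Q)$. These give $Q=(\pi^2 n/2)^{1/3}+O(\log n)$. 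Substituting,
\[
\hat k\le \tfrac{3}{\pi^2}\big(\tfrac{\pi^2 n}{2}\big)^{2/3}+O(n^{1/3}\log n)=\tfrac{3}{(4\pi^2)^{1/3}}n^{2/3}+O(n^{1/3}\log n).
\]
The correction term $(n-S(Q))/(Q+1)$ is only $O(Q)$, because $S(Q+1)-S(Q)=(Q+1)\varphi(Q+1)\le (Q+1)Q$.

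\textbf{Matching construction.} List the chosen fractions in increasing order, and for each fraction $p/q$ form the block $1^p0^{q-p}$, i.e.\ ones first, then zeros. Place any leftover $r<Q+1$ points as zeros at the very start, where they merge into the value-$0$ block. Within a block, every prefix average is $\ge p/q$, equivalently every suffix average is $\le p/q$. Together with strictly increasing block means, this is exactly the characterization of the isotonic solution by its level sets: the fit is the minimum over upper indices of the maximum over lower indices of the block averages, or equivalently the KKT conditions hold. So the fitted values are precisely these block means, and the number of distinct values equals the greedy count. The upper and lower bounds therefore coincide, and $\delta_n$ is given exactly by the expression $N(Q)+\lfloor (n-S(Q))/(Q+1)\rfloor-\tfrac{3}{(4\pi^2)^{1/3}}n^{2/3}$.

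\textbf{Main obstacle.} The combinatorial part is routine. The delicate step is the careful error bookkeeping: inverting $S(Q)\approx n$ to obtain $Q$ with $O(\log n)$ error, and then propagating the $O(Q\log Q)$ remainder of the totient sums into the $O(n^{1/3}\log n)$ bound on $|\delta_n|$. I would do this using the standard Mertens-type bound $\sum_{q\le Q}\varphi(q)=\tfrac{3}{\pi^2}Q^2+O(Q\log Q)$ together with Abel summation.
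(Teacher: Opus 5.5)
Your proposal is correct and follows the paper's proof. You use the same reduction of level-set blocks to distinct reduced fractions taken greedily in order of denominator, the same totient asymptotics (via Abel summation plus an $O(\log n)$ inversion for $Q$), and the same ``ones then zeros'' block construction for the matching lower bound. The only differences are immaterial: you certify the construction through the prefix-average/KKT characterization instead of the min--max formula, and you pad leftover points as zeros at the start rather than ones at the end.
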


The result of Theorem~\ref{thm:main} applies to isotonic regression when the ordering constraint requires that $\theta_i\leq \theta_j$ for each $i\leq j$ (with $i,j\in\{1,\dots,n\}$). In many contexts, it is common to consider variants of isotonic regression that require different types of ordering constraints, and our results apply to such settings as well.
To make this concrete, define 
\[\mathcal{M}_n^{\uparrow,\preceq}:= \{\theta\in \mathbb{R}^n : \theta_i\leq\theta_j\textnormal{ for all }i\preceq j\},\]
where $\preceq$ is any \emph{preordering} on $\{1,\dots,n\}$, meaning that $\preceq$ is transitive, and satisfies $i\preceq i$ for all $i$. Allowing $\preceq$ to be a preordering includes partial orderings and total orderings as special cases, but also allows for ties. This is important in statistical settings: for instance, if we observe data points $(X_i,Y_i)\in\mathbb{R}\times\{0,1\}$ and would like to perform isotonic regression of $Y$ onto $X$, then we would define $i\preceq j$ whenever $X_i\leq X_j$ (and we may have a tie, $i\preceq j$ and $j\preceq i$, if $X_i=X_j$).

\begin{remark}\label{rmk:preordering}
The bound established in the main theorem applies in the more general setting of a preordering, as well. If we define $\hat{k}_{\preceq}(y)$ as the number of unique values in $\textnormal{iso}_{\preceq}(y)$, where
$
\textnormal{iso}_{\preceq}(y) := \argmin_{\theta\in \mathcal{M}_n^{\uparrow,\preceq}}\|y-\theta\|^2_2,
$
then it holds that
\begin{equation}\label{eqn:result_preordering}\sup_{y \in \{0,1 \}^n}
        \hat{k}_{\preceq}(y) \leq \frac{3}{(4\pi^2)^{1/3}} n^{2/3} + \delta_n\end{equation}
(following the same proof). However, this may be a strict inequality, unless $\preceq$ is a total ordering.
\end{remark}

\begin{remark}\label{rmk:levelset} 
The key property of isotonic regression used in the proof is the following property: it partitions the indices $i=1,\dots,n$ into blocks, such that within any block, the value $\mathrm{iso}(y)_i$ is equal to the sample mean of the $y$ values within the block  \citep{lee1983min}. This type of proof holds also for any mapping $y\mapsto T(y)\in\R^n$ that satisfies the following condition:
\begin{equation}\label{eqn:local_avg}\textnormal{(Level-set averaging) }
    \textnormal{For any $y\in\R^n$ and any $i\in[n]$, \ }T(y)_i = \frac{\sum_{j=1}^n \ind{T(y)_j=T(y)_i} \cdot y_j}{\sum_{j=1}^n \ind{T(y)_j=T(y)_i}}.
\end{equation}
That is, the right-hand side of \cref{eq:thm_main_upper} also holds as an upper bound for any level-set averaging algorithm, including histogram binning \citep{ zadrozny2001obtaining} and unimodal regression \citep{frisen1986unimodal}. 
\end{remark}

\begin{remark}
    While we only prove that $|\delta_n| = O(n^{1/3}\log n)$ in \cref{thm:main}, we hypothesize that $|\delta_n| = \widetilde{O}(n^{1/6})$. Assuming the Riemann hypothesis allows tighter bounds on partial sums of the Möbius function \citep{Soundararajan+2009+141+152}, which are related to partial sums of the Euler-totient function (see \cref{lemma:euler_totient_moment}).  We numerically verify $\delta_n \leq 0.63n^{1/6}$ for $n\le 2\cdot 10^{23}$ in \cref{prop:error_term_certificate}.
\end{remark}

\subsection{Proof of Theorem~\ref{thm:main} (and Remarks~\ref{rmk:preordering} and~\ref{rmk:levelset})}
The proof of Theorem~\ref{thm:main} is constructive. We demonstrate the worst-case binary sequence $y$ yields a sorted subset of the Farey series \citep[Section 3.1]{hardy1979introduction} as the fitted values. We then use a standard calculation in analytic number theory for the first moment of Euler's totient function~$\varphi$. 
As mentioned above, this proof also verifies that the right-hand side of~\eqref{eq:thm_main_upper} provides an upper bound more generally, in the settings of Remarks~\ref{rmk:preordering} and~\ref{rmk:levelset}.

\begin{proof}
    Since isotonic regression satisfies the level-set averaging property \eqref{eqn:local_avg}, we wish to minimize the size of each block to maximize the total number of blocks possible. However, since $y$ is binary, we can only represent rational numbers of the form $\frac{a}{b}$, where $a = \sum_{i=u}^v y_i$ denotes the number of ones between $u$ and $v$, and $b = v-u+1$ denotes the total number of samples in that interval. We want to minimize the block size~$b$, subject to the constraints that (1) $0\le a\le b$ and (2) the ratios $\frac{a}{b}$ and $\frac{a'}{b'}$ are distinct across blocks. To achieve the second constraint while making $b$ as small as possible, we insist that the fraction $\frac{a}{b}$ be in standard form, i.e. the greatest common divisor between $a$ and $b$ is 1. We can thus order the potential fitted values of isotonic regression in increasing order of block size:
\begin{equation} \label{eq:farey_series}
    \frac{0}{1}, \frac{1}{1}, \frac{1}{2}, \frac{1}{3}, \frac{2}{3}, \frac{1}{4}, \frac{3}{4}, \frac{1}{5}, 
    \frac{2}{5}, \frac{3}{5}, \frac{4}{5}, \frac{1}{6},
    \frac{5}{6}, \frac{1}{7}, \frac{2}{7}, \frac{3}{7},
    \frac{4}{7}, \frac{5}{7}, \frac{6}{7}, \frac{1}{8}, \frac{3}{8}, \frac{5}{8}, \frac{7}{8}, \frac{1}{9}, \frac{2}{9}, \frac{4}{9}, \frac{5}{9}, \frac{7}{9}, \frac{8}{9}, \ldots
\end{equation}

The number of blocks of a given size $b$ is Euler's totient function $\varphi(b)$ when $b \geq 2$. We thus have the exact expression 

\begin{equation}
\hat{k}(y) 
\leq \sup\left\{1+ \sum_{j=1}^{K-1} \varphi(j)+\ell : (K,\ell)\,\textnormal{ s.t. }\,K\geq 1, \, 0\leq \ell \leq \varphi(K), \, 1+\sum_{j=1}^{K-1} j\varphi(j) + \ell K \le n\right\},\label{eqn:tight_df_bound}  
\end{equation}
where the supremum is taken over integers \(K\) and \(\ell\).

Let $K_0$ refer to the $K$ which attains the supremum above. Without loss of generality, we will assume $K_0\geq 2$, since the $K_0=1$ case is trivial.

We know $\sum_{j=1}^{K} \varphi(j) = \frac{3}{\pi^2}K^2 + O(K\log K)$ \citep[Theorem 330]{hardy1979introduction}. Using this after applying Abel's identity \citep[Theorem 4.2]{apostol2013introduction}, we get that 
\begin{align*}
 \sum_{j=1}^{K_0} j\varphi(j) &= K_0\sum_{j=1}^{K_0} \varphi(j) -\int_1^{K_0} \sum_{1 \leq j \leq t} \varphi(j) dt\\
 &=\frac{3}{\pi^2} K_0^3 + O(K_0^2 \log K_0) - \int_1^{K_0} \left( \frac{3}{\pi^2}t^2 + O(t\log t) \right)dt\\
 &= \frac{2}{\pi^2}K_0^3 + O(K_0^2 \log K_0).
\end{align*}

Since $1+\sum_{j=1}^{K_0-1} j\varphi(j)\le n \leq 1+\sum_{j=1}^{K_0} j\varphi(j)$, we know  
    $n = \frac{2}{\pi^2}K_0^3 + O(K_0^2 \log K_0)$.

Then, by \cref{lemma:inversion_Olog_main_thm}, 
   $K_0 = \left(\frac{\pi^2}{2}\right)^{1/3} n^{1/3} + O(\log n)$.

By plugging this into  $\sup_{y \in \{0,1 \}^n} \hat k(y)= \frac{3}{\pi^2}K_0^2 + O(K_0 \log K_0)$, we know
\begin{equation*}
    \sup_{y \in \{0,1 \}^n} \hat k(y) \leq \frac{3}{(4\pi^2)^{1/3}} n^{2/3} + O(n^{1/3} \log n).
\end{equation*}

\textbf{Lower bound}
Fix $n$ and suppose $1\preceq 2\preceq \dots \preceq n$. Consider $y^\star$ designed to output, after applying isotonic regression, the first $\hat k(y^\star)$ elements in the sequence \cref{eq:farey_series} (corresponding to the Farey series), with each intended fitted value $\frac{a}{b}$ corresponding to a block of $a$ 1s and then $b-a$ 0s. For example, if $n=18$, then  
    \[
\renewcommand{\arraystretch}{1.25}
\begin{array}{r@{\;}c@{\;}*{7}{c@{\;\;}}}
y^\star &=&
\bigl[\underbrace{0} &
      \underbrace{1,0,0,0} &
      \underbrace{1,0,0} &
      \underbrace{1,0} &
      \underbrace{1,1,0} &
      \underbrace{1,1,1,0} &
      \underbrace{1}\bigr]
\\
\mathrm{iso}(y^\star) &=&
\bigl[\overbrace{0} &
      \overbrace{\tfrac14,\tfrac14,\tfrac14,\tfrac14} &
      \overbrace{\tfrac13,\tfrac13,\tfrac13} &
      \overbrace{\tfrac12,\tfrac12} &
      \overbrace{\tfrac23,\tfrac23,\tfrac23} &
      \overbrace{\tfrac34,\tfrac34,\tfrac34,\tfrac34} &
      \overbrace{1}\bigr]
\end{array}
\]
    If $n$ is such that we have leftover samples that cannot represent a new reduced fraction, we will append them as 1s. For example, if $n=19$, then
\[
\renewcommand{\arraystretch}{1.25}
\begin{array}{r@{\;}c@{\;}*{7}{c@{\;\;}}}
y^\star &=&
\bigl[\underbrace{0} &
      \underbrace{1,0,0,0} &
      \underbrace{1,0,0} &
      \underbrace{1,0} &
      \underbrace{1,1,0} &
      \underbrace{1,1,1,0} &
      \underbrace{1,1}\bigr]
\\
\mathrm{iso}(y^\star) &=&
\bigl[\overbrace{0} &
      \overbrace{\tfrac14,\tfrac14,\tfrac14,\tfrac14} &
      \overbrace{\tfrac13,\tfrac13,\tfrac13} &
      \overbrace{\tfrac12,\tfrac12} &
      \overbrace{\tfrac23,\tfrac23,\tfrac23} &
      \overbrace{\tfrac34,\tfrac34,\tfrac34,\tfrac34} &
      \overbrace{1,1}\bigr]
\end{array}
\]

    The fact that $\mathrm{iso}(y^\star)$ outputs the desired sequence follows from the minmax formulation of isotonic projection: $\mathrm{iso}(y)_k = \min _{j \geq k} \max_{i\leq k} \Bar{y}_{i:j}$ \citep[Chapter 1]{RWD88}.

    It follows from the construction of $y^\star$ that
    \begin{equation*}
        \hat{k}(y^\star) 
= \sup\left\{1+ \sum_{j=1}^{K-1} \varphi(j)+\ell : (K,\ell)\,\textnormal{ s.t. }\,K\geq 1, \, 0\leq \ell \leq \varphi(K), \, 1+\sum_{j=1}^{K-1} j\varphi(j) + \ell K \le n\right\},
    \end{equation*}
    matching \cref{eqn:tight_df_bound}.

    Reusing the same steps used to prove the upper bound, we achieve the desired result.
\end{proof}

The result bears similarities to \citet{jarnik1926gitterpunkte}, which shows that the maximum number of integer lattice points a strictly convex curve of length $\ell$ can pass through is $\frac{3}{(4\pi)^{1/3}}\ell^{2/3} + O(\ell^{1/3})$.

%% file: src/calibration_consequences.tex
\section{Consequences for calibration\label{sec:calibration}}

With \cref{thm:main} in hand, we are now able to demonstrate an important statistical property of isotonic regression: it provides calibrated probabilistic forecasts for binary outcomes. Specifically, suppose $X \in [0,1]$ is a probabilistic forecast of whether $Y = 1$. While $X$ may be the outcome of a state-of-the-art neural network, we may not be able to guarantee that $X$ itself is close to $\mathbb E[Y \mid X]$: we hope that an $X\%$ forecast corresponds to a true $X\%$ probability. To improve calibration properties of forecasts, it is common to use isotonic regression to post-process the nominal forecasts $X$, since the monotonicity constraint is natural: larger forecasts should, on average, yield higher true probabilities \citep{zadrozny2002transforming}. Concretely, isotonic calibration entails using isotonic regression to fit a $\hat f$ using fresh data $\{ (X_i, Y_i)\}_{i=1}^n$. For example, if a precipitation model were trained on pre-2020 data, then isotonic calibration may use 2021 to 2025 data to calibrate the initial forecast model. The question remains whether this is guaranteed to improve calibration properties and, if so, how fast. To this end, a natural way to formulate this is assessing whether 
\begin{equation*}
    \Delta(\hat f) :=\mathbb E[|\mathbb E[Y_{n+1}\mid \hat f(X_{n+1}), \hat f] - \hat f(X_{n+1})| \mid \hat f]
\end{equation*}
goes to zero with high probability as $n$ goes to infinity. 

This quantity $\Delta(\hat f)$, called Expected Calibration Error (ECE), is impossible to control even asymptotically for many popular post-processing algorithms, such as Platt scaling \citep{platt1999probabilistic}, since they provide $\hat f$ that are almost surely injective \citep{gupta2020distribution}. To date, it has been an open problem whether isotonic regression is able to control ECE asymptotically, since isotonic regression does not provide an injective function when the degrees of freedom is less than $n$. Therefore, the hardness result of \citet{gupta2020distribution} may not apply, depending on the setting. 

Using our degrees of freedom result, we are able to show that isotonic calibration does in fact control ECE asymptotically in the natural setting where $Y$ is binary. Specifically, we prove that
    $\Delta(\hat f) = O_p(n^{-1/6}\sqrt{\log n})$
when $\hat f$ is an isotonic regressor. This bound on ECE ensures other favorable properties, such as bounds on swap regret, that are of significant importance to decision-makers, from airlines deciding whether to cancel flights given a forecasted chance of a blizzard to doctors selecting treatment plans based on a patient's assessed risk \citep{kleinberg2023u, hu2024calibrationerrordecisionmaking, rossellini2025can}.

\begin{theorem} \label{thm:calibration}
    Suppose $(X_i, Y_i) \overset{iid}{\sim} P$ for $i=1,\dots, n+1$, where $Y \in \{0,1\}$ and $X \in \R$. Suppose $\hat f$ extends the isotonic regression fit on ${(X_i,Y_i)}_{i=1}^n$ 
    by assigning each $x\in\mathbb R$ to a fitted PAVA block, for example by nearest-neighbor interpolation.
 Then for any $\delta \in (0,1)$,
    \[\PP\left(\Delta(\hat{f}) \geq O\left(\sqrt{\frac{\log(n/\delta)}{n^{1/3}}}\right) \right) \leq \delta.\]
\end{theorem}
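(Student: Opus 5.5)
The plan is to reduce the ECE of $\hat f$ to a uniform deviation bound over the blocks of the isotonic fit, and to control the number of those blocks with \cref{thm:main}. Because $\hat f$ maps every $x$ into one of the fitted PAVA blocks, its range has at most $\hat k \le C n^{2/3}$ distinct values. For each block $B$ with fitted value $v_B$, the set $R_B = \{x : \hat f(x) = v_B\}$ is an interval, or at least a convex region determined by the block endpoints, since nearest-neighbor interpolation preserves monotonicity. Define $p_B = \PP(X\in R_B)$ and $\mu_B = \EE[Y\mid X\in R_B]$. Then
\[
\Delta(\hat f) = \sum_B p_B\,|\mu_B - v_B|,
\]
where $v_B$ is the empirical mean of the $Y_i$ with $X_i \in R_B$. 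The assumption that different blocks have distinct fitted values is what lets us condition on $\hat f(X_{n+1})$ as on the region.

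The main obstacle is that the regions $R_B$ are data-dependent, so we need a uniform concentration bound over a fixed class that contains them. Every $R_B$ is an interval in $\R$, with endpoints among midpoints of the $X_i$ (or $\pm\infty$). So I would prove a uniform bound over all intervals $I$ whose empirical count is $m_I = \#\{i : X_i\in I\}$:
\[
\Bigl|\tfrac{1}{m_I}\textstyle\sum_{X_i\in I} Y_i - \EE[Y\mid X\in I]\Bigr| \le C\sqrt{\log(n/\delta)/m_I}
\quad\text{and}\quad
|m_I/n - \PP(X\in I)| \le C\sqrt{\PP(X\in I)\log(n/\delta)/n} + C\log(n/\delta)/n .
\]
The cleanest route is to condition on the $X$'s. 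Given the design, the intervals induce only $O(n^2)$ distinct index sets, and the $Y_i$ are conditionally independent Bernoulli variables. Hoeffding's inequality with a union bound over $O(n^2)$ sets then gives the first display, except that the centering is the average of the conditional means $\EE[Y\mid X=X_i]$ over $X_i\in I$. Comparing that average to $\mu_B$ is a second concentration step. I would handle it by a union bound over intervals with endpoints at data points, applying Bernstein to $\sum_i (\eta(X_i)-\mu_I)\mathbbm{1}\{X_i\in I\}$ with $\eta(x) = \EE[Y\mid X=x]$. This term is centered for a fixed interval, and a VC or finite-class argument covers the random ones. Ties and boundary effects from the interpolation between adjacent points cost only $O(1)$ points per block, contributing $O(\hat k/n)$ in total. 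The same bracketing handles the relative Bernstein bound on the masses $p_B$ versus $m_B/n$.

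With these in hand,
\[
\Delta(\hat f) \lesssim \sum_B \frac{m_B}{n}\sqrt{\frac{\log(n/\delta)}{m_B}} + \text{(mass-error terms)} = \frac{\sqrt{\log(n/\delta)}}{n}\sum_B \sqrt{m_B}.
\]
Cauchy--Schwarz gives $\sum_B\sqrt{m_B} \le \sqrt{\hat k\, n}$, so the right-hand side is at most $\sqrt{\hat k \log(n/\delta)/n}$. \cref{thm:main} (through \cref{rmk:preordering} to handle ties in $X$) gives $\hat k \le C n^{2/3}$, so this is $\sqrt{\log(n/\delta)/n^{1/3}}$. The mass-error terms contribute $\sum_B |p_B - m_B/n|$ times a factor of at most $1$. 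By the relative bound and Cauchy--Schwarz again, that sum is $O\bigl(\sqrt{\hat k \log(n/\delta)/n} + \hat k\log(n/\delta)/n\bigr)$, which is of the same order or smaller. This yields the claim with probability at least $1-\delta$.
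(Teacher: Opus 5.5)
Your proposal is correct in outline, but it follows a different route from the paper.

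\textbf{The paper's route.} The paper never estimates conditional means. It bounds
$P(A_j)\,|\mu_n(A_j)-\mu(A_j)|\le |P_n(A_j)-P(A_j)|+|P_n(A_j)\mu_n(A_j)-P(A_j)\mu(A_j)|$.
Both pieces are then unnormalized deviations $\frac1n\sum_i \mathbf{1}\{X_i\in A\}Z_i-\EE[\mathbf{1}\{X\in A\}Z]$ with $Z\in\{1,Y\}$. A single Bernstein-type lemma, uniform over all intervals, controls them. Uniformity comes from the quantile transform $X=h(U)$ and a union bound over the $O(n^2)$ \emph{deterministic} grid intervals $[j/n,k/n]$, with rounding costing $2/n$ per block.

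\textbf{Your route.} You prove the per-block statement $|v_B-\mu_B|\lesssim\sqrt{\log(n/\delta)/m_B}$ in three concentration steps: Hoeffding conditional on the design, a Bernstein step for $\bar\eta_I$ versus $\mu_I$, and a relative mass bound. This gives a more interpretable intermediate result: every fitted level is close to its block's true conditional mean at the natural rate. One step would suffice, though. Your second step is the paper's lemma in disguise, since $\frac1n\sum_i(\eta(X_i)-\mu_I)\mathbf{1}\{X_i\in I\}$ equals the $Z=\eta(X)$ deviation minus $\mu_I$ times the $Z=1$ deviation.

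\textbf{Two steps of your sketch need more care.}

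First, the union bound ``over intervals with endpoints at data points'' is not a union bound over fixed events. You can make it rigorous in either of two ways.
\begin{enumerate}
\item Condition on the two endpoint indices, apply Bernstein to the other $n-2$ points, then union-bound over the $n^2$ pairs.
\item Use a fixed quantile grid, as the paper does.
\end{enumerate}
Either way, the bound must be variance-adaptive. A plain VC deviation of order $\sqrt{\log n/n}$ sums over blocks to $\hat k\sqrt{\log n/n}$, which for $\hat k\asymp n^{2/3}$ is of order $n^{1/6}\sqrt{\log n}$ and so is useless.

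Second, the interpolation gaps adjoining each block have mass $O(\log(n/\delta)/n)$ with high probability, not $O(1/n)$, because the largest uniform spacing is of order $\log n/n$. So the boundary correction is $O(\hat k\log(n/\delta)/n)$. This is harmless: it matches the second-order term you already carry, and when $\hat k\log(n/\delta)>n$ the target bound exceeds $1$ anyway.
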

\begin{proof}
    First we calculate the ECE $\Delta(f)$ for any piecewise constant function $f$, i.e., of the form
    $f(x) = \sum_{j=1}^m a_j\cdot \ind{x\in A_j}$,
    for some $m\geq 1$, distinct values $a_1,\dots,a_m\in[0,1]$, and interval partitioning $\R = A_1\cup\dots \cup A_m$. 
    
    For any interval $A\subseteq\R$, let $P(A) = \PP(X\in A)$, and let $\mu(A) = \EE[Y \mid X\in A]$ if $P(A)>0$ (or, if $P(A)=0$ then arbitrarily we set $\mu(A)=\frac{1}{2}$). Then for each $j$, if $P(A_j)>0$ we have $\EE[Y\mid f(X)=a_j] =\EE[Y\mid X\in A_j] = \mu(A_j)$, and so
    \[\Delta(f) = \EE [|\EE [Y\mid f(X)] - f(X)|] = \sum_{j=1}^m P(A_j) \cdot |a_j - \mu(A_j)|.\]

    Now we return to the data set of size $n$. For any $A\subseteq\R$, define $P_n(A) = \frac{1}{n}\sum_{i=1}^n \ind{X_i\in A}$, 
    and define
    $\mu_n(A) = \frac{\sum_{i=1}^n \ind{X_i\in A}\cdot Y_i}{\sum_{i=1}^n \ind{X_i\in A}}$
    if $P_n(A)>0$ (the sample mean of $Y$ among all data points with $X\in A$), or arbitrarily set $\mu_n(A)=\frac{1}{2}$ if $P_n(A)=0$. For brevity, let $\hat k$ refer to $\hat k(y)$, where $y \in \{0,1 \}^n$ is the sequence of $Y$ values to which we apply PAVA.
    Now recall the construction of the fitted isotonic regression model $\hat{f}$: for some data-dependent partition $\R = A_1 \cup \dots \cup A_{\hat{k}}$, we have $\hat{f}(x) = \mu_n(A_j)$ for all $x\in A_j$.
    Therefore,
    $\Delta(\hat{f}) = \sum_{j=1}^{\hat{k}} P(A_j) \cdot \left|\mu_n(A_j) - \mu(A_j)\right|$.
    
    We can upper bound this as
    \begin{align*}
        \Delta(\hat{f})
        % &= \sum_{j=1}^{\hat{k}} P(A_j) \cdot \left|\mu_n(A_j) - \mu(A_j)\right|\\
        &\leq \sum_{j=1}^{\hat{k}} \left(  \left|P_n(A_j) \cdot \mu_n(A_j) - P(A_j)\cdot \mu_n(A_j)\right|+\left|P_n(A_j) \cdot \mu_n(A_j) - P(A_j)\cdot \mu(A_j)\right|\right)\\
        &\leq \sum_{j=1}^{\hat{k}}\left|P_n(A_j) - P(A_j)\right|+ \sum_{j=1}^{\hat{k}}\left|P_n(A_j) \cdot \mu_n(A_j) - P(A_j)\cdot \mu(A_j)\right| =:\Delta_0 + \Delta_1.
    \end{align*}
These two terms can both be written in the form
       $\Delta_\ell =  \sum_{j=1}^{\hat{k}}  \left|\frac{1}{n}\sum_{i=1}^n \ind{X_i\in A_j} Z^\ell_i - \EE[\ind{X\in A_j}Z^\ell]\right|$
by defining $Z^0=Z^0_i=1$  (for the first term, $\Delta_0$) and $Z^1=Y$, $Z^1_i=Y_i$ (for the second term, $\Delta_1$).
   Next we need the following lemma on empirical processes.
    \begin{lemma}\label{lem:XZ}
        Let $(X,Z)\in\R\times[0,1]$ be drawn from any distribution, and fix $\delta\in (0,1]$. Then, with probability at least $1-\delta$, it holds for all intervals $A\subseteq\R$ that
        \[ \left|\frac{1}{n}\sum_{i=1}^n \ind{X_i\in A} \cdot Z_i  - \EE[\ind{X\in A}Z]\right| \leq \sqrt{\left(\PP(X\in A)+\frac{2}{n}\right)\cdot \frac{4\log(en/\delta)}{n}} + \frac{2\log(en/\delta)/3+2}{n}.
        \]
    \end{lemma}
Applying this lemma, with $Z^0$ or $Z^1$ in place of $Z$ and with $\delta/2$ in place of $\delta$, we then have that for each $\ell=0,1$ it holds with probability at least $1-\delta/2$ that
\begin{align*}
    \sum_{j=1}^{\hat{k}}  \left|\frac{1}{n}\sum_{i=1}^n \ind{X_i\in A_j} Z^\ell_i - \EE[\ind{X\in A_j}Z^\ell]\right| 
    &\leq \sum_{j=1}^{\hat{k}} \left(\sqrt{\left(P(A_j)+\frac{2}{n}\right)\cdot \frac{4\log(2en/\delta)}{n}} + \frac{2\log(2en/\delta)/3+2}{n} \right).
\end{align*}
By Cauchy-Schwarz,
    $\sum_{j=1}^{\hat{k}} \sqrt{(P(A_j)+2/n)} \leq \sqrt{\hat k \sum_{j=1}^{\hat{k}}(P(A_j)+2/n)} \leq \sqrt{\hat k(1+2\hat k/n)} \leq \sqrt{3\hat k}.$
Therefore,
\[\PP\left(\Delta(\hat{f}) \leq 2\left[\sqrt{\frac{12\hat{k}\log(2en/\delta)}{n}} + \frac{2\hat{k}(\log(2en/\delta)/3+2)}{n}\right]\right) \geq 1-\delta.\]
    The proof is then completed by applying \cref{thm:main} to bound $\hat{k}$.
\end{proof}

\begin{remark}
    The approach in \cref{thm:calibration} can in principle be extended to any level-set-averaging algorithm whose preimages lie in a bounded complexity class. For example, unimodal regression's preimages are unions of two intervals. Notably, this means that \cref{thm:calibration} does not directly extend to multivariate isotonic regression: the assumption that $X \in \mathbb R$ was necessary to ensure a total preorder of the covariates, and hence interval preimages.
\end{remark}

\subsection{Comparison with previous calibration results for isotonic regression} First, it has previously been observed \citep{berta2024classifier,rossellini2025can, allen2025sample} that isotonic regression is calibrated in-sample, and therefore has an expected calibration error of zero with respect to the empirical measure. This simply follows from isotonic regression's level-set averaging property. In contrast, \cref{thm:calibration} demonstrates that isotonic regression is calibrated \emph{out-of-sample} for an independent and identically drawn test point. Second, \citet{van2023causal} demonstrate that isotonic regression ensures ECE is $O(n^{-1/3})$ under the assumption that the true conditional mean $\mathbb E[Y\mid X=x]$ is a bounded-variation function. In contrast, our result ensures ECE vanishes at a slower rate, but we require no assumptions on the conditional distribution. Finally, \citet[Theorem 3.1]{allen2025sample} may appear similar to our result, but they essentially bound the distance from calibration \citep{blasiok2023unifying} of isotonic regression, a weaker notion of calibration than ECE. See \citet{rossellini2025can} for additional discussion on these forms of calibration.

%% file: src/discussion.tex
\section{Discussion}\label{sec:discussion}

Our results suggest several open questions for future work.

We believe \cref{thm:main} is loose in the i.i.d.\ setting. In the well-specified isotonic regression setting with homoskedastic Gaussian noise, \citet{meyer2000degrees} show that the expected degrees of freedom is at most $O(n^{1/3})$. Based on our own simulations, we think this upper bound also applies in the binary setting. \citet{allen2025sample} have also made this hypothesis and point to a result \citep[Lemma 3.1]{groeneboom2011vertices} for the Grenander estimator as being potentially relevant.

For \cref{thm:calibration}, we think the result is loose because (1) the deterministic bound on $\hat k$ is loose in i.i.d.\ settings for the reason outlined above, (2) we could not use a bound on $\hat k$ for a proof in the more general $Y \in [0,1]$ setting, since we cannot ensure a sublinear bound on $\hat k$ in general, and (3) in the well-specified isotonic regression setting we would expect a $O(n^{-1/3})$ rate, since consistency error upper-bounds calibration error.

Our approach for \cref{thm:calibration} may be tight under exchangeability, since our bound on $\hat k$ can be achieved almost surely for a specific $P^n$ that mimics our construction of $y^\star$.

%% file: src/acknowledgement.tex
\section*{Acknowledgment}

The authors gratefully acknowledge the National Science Foundation via grant DMS-2023109. 
J.S.\ and R.F.B.\ were partially supported by the Office of Naval Research via grant N00014-24-1-2544. J.S.\ was also partially supported by the Margot and Tom Pritzker Foundation. Z.R.\ was supported by NSF grant DMS-2413135.

%% file: src/appendix.tex
\makeatletter
\@ifpackageloaded{ntheorem}{%
  \newcommand{\keylemmaProofTitle}{of Lemma~\ref{lem:XZ}}%
}{%
  \newcommand{\keylemmaProofTitle}{Proof of Lemma~\ref{lem:XZ}}%
}
\makeatother

\section{Proof of key lemma}

\begin{proof}[\keylemmaProofTitle]
    Let $h:(0,1)\to\R$ be a generalized inverse of the cumulative distribution function of $X$, so that $h$ is monotone nondecreasing, and $h(U)$ is equal in distribution to $X$ when $U\sim\textnormal{Unif}[0,1]$. Without loss of generality we can then replace $X_i$ with $h(U_i)$, where $U_1,\dots,U_n$ are i.i.d.\ uniform random variables (and, $Z_i\mid h(U_i)$ is then drawn from the conditional distribution of $Z\mid X$). For any interval $A\subseteq\R$, since $h$ is monotone, $B=h^{-1}(A)\subseteq(0,1)$ is also an interval. Therefore, we only need to show, for all intervals $B\subseteq[0,1]$, that
            \[ \left|\frac{1}{n}\sum_{i=1}^n \ind{U_i\in B} \cdot Z_i  - \EE[\ind{U\in B}Z]\right| \leq \sqrt{\left(\mathrm{Leb}(B)+\frac{2}{n}\right)\cdot \frac{4\log(en/\delta)}{n}} + \frac{2\log(en/\delta)/3+2}{n},\]
    where we have replaced $(X_i,Z_i)$ with $(h(U_i),Z_i)$ for each $i$, and $(X,Z)$ with $(h(U),Z)$ (so that $X\in A$ is equivalent to $U\in B$, for the appropriate interval $B$). Next, define $B_{jk} = [\frac{j}{n},\frac{k}{n}]$ for each $0\leq j< k \leq n$. For any interval $B\subseteq[0,1]$, by considering the endpoints and rounding to the grid $\{0,\frac{1}{n},\dots,\frac{n-1}{n},1\}$, we can see that we can always find\footnote{Round outward so that $B_{jk} \supseteq B$ if $\frac{1}{n}\sum_{i=1}^n\ind{U_i\in B} \cdot Z_i  - \EE[\ind{U\in B}Z] \geq 0$ and round inward so that $B_{jk} \subseteq B$ otherwise.} some $j,k$ such that
    \[\left|\frac{1}{n}\sum_{i=1}^n \ind{U_i\in B} \cdot Z_i  - \EE[\ind{U\in B}Z] \right|
    \leq \left|\frac{1}{n}\sum_{i=1}^n \ind{U_i\in B_{jk}} \cdot Z_i - \EE[\ind{U\in B_{jk}}Z]\right| + \frac{2}{n},\]
    and,
    \[\left|\mathrm{Leb}(B) - \mathrm{Leb}(B_{jk})\right| \leq \frac{2}{n}.\]
    Next, by the Bernstein inequality, for each $j,k$ it holds with probability $1-\delta$ that
    \[\left|\frac{1}{n}\sum_{i=1}^n \ind{U_i\in B_{jk}} \cdot Z_i - \EE[\ind{U\in B_{jk}}Z]\right|\leq \sqrt{\mathrm{Leb}(B_{jk})\cdot \frac{2\log(2/\delta)}{n}} + \frac{\log(2/\delta)}{3n}. \]
    If we replace $\delta$ with $\frac{2\delta}{n(n+1)}$ to account for all possible grid point combinations, then with probability $1-\delta$ we have simultaneously, for all $0 \leq j<k \leq n$, that
    \[\left|\frac{1}{n}\sum_{i=1}^n \ind{U_i\in B_{jk}} \cdot Z_i - \EE[\ind{U\in B_{jk}}Z]\right|\leq \sqrt{\mathrm{Leb}(B_{jk})\cdot \frac{2\log(n(n+1)/\delta)}{n}} + \frac{\log(n(n+1)/\delta)}{3n}. \]
    Putting it all together, we have with probability $1-\delta$
    \begin{align*}
        \left|\frac{1}{n}\sum_{i=1}^n \ind{U_i\in B} \cdot Z_i  - \EE[\ind{U\in B}Z] \right| &\leq \sqrt{\mathrm{Leb}(B_{jk})\cdot \frac{2\log(n(n+1)/\delta)}{n}} + \frac{\log(n(n+1)/\delta)/3+2}{n}\\
        &\leq \sqrt{\left(\mathrm{Leb}(B)+\frac{2}{n}\right)\cdot \frac{2\log(n(n+1)/\delta)}{n}} + \frac{\log(n(n+1)/\delta)/3+2}{n}.
    \end{align*}
    For $\delta \in (0,1)$ and $n\geq 1$, we have $\log \frac{n(n+1)}{\delta} \leq \log \frac{2 n^2}{\delta} \leq 2 \log \frac{e n}{\delta}$.
\end{proof}

\section{The constants on the lower order terms for \cref{thm:main}}

\begin{proposition}
    Let $T: [0,1]^n \to [0,1]^n$ be any level-set averaging algorithm \eqref{eqn:local_avg}. Let $\hat k(y)$ be the number of unique values outputted by $T(y)$. Then, $\forall y \in \{0,1\}^n$, we have
\begin{equation*}
    \hat{k}(y)-1 \leq \frac{3 n^{2 / 3}}{(2 \pi)^{2 / 3}}+\frac{11 \pi^{2 / 3} n^{1/3} \log (n)}{2^{1/3}}+\frac{3 \cdot 2^{2 / 3} n^{1/3}}{\pi^{4 / 3}}+\frac{39}{4} \pi^2 \log ^2(n)+11 \log (n)+\frac{3}{\pi^2}
\end{equation*}
\end{proposition}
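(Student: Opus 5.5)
We make the $O(\cdot)$ terms in the proof of \cref{thm:main} fully explicit. Start from the combinatorial reduction \eqref{eqn:tight_df_bound}, which holds for any level-set averaging $T$ by \cref{rmk:levelset}. Each level set of $T(y)$ with value $\frac{a}{b}$ in lowest terms contains a multiple of $b$ points, so it uses at least $b$ indices. Fix $y$, let $\hat k=\hat k(y)$, and let $K$ be the largest denominator forced by a greedy exchange argument: replacing any used fraction with an unused one of smaller denominator does not decrease the count. Then
\[
\hat k-1\le \Phi(K):=\sum_{j=1}^{K}\varphi(j),\qquad n\ge \sum_{j=1}^{K-1} j\varphi(j)=:S(K-1).
\]
So it suffices to bound $\Phi(K)$ in terms of $n$, using only explicit inequalities for $\Phi$ and $S$.

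\textbf{Explicit totient estimates.} From $\varphi(j)=\sum_{d\mid j}\mu(d)\,j/d$ we get
\[
\Phi(x)=\tfrac12\sum_{d\le x}\mu(d)\lfloor x/d\rfloor\bigl(\lfloor x/d\rfloor+1\bigr).
\]
Writing $\lfloor x/d\rfloor=x/d-\{x/d\}$ and using $\sum_{d\le x}\mu(d)/d^2=6/\pi^2-\sum_{d>x}\mu(d)/d^2$, $\bigl|\sum_{d>x}\mu(d)/d^2\bigr|\le 1/x$, and $\sum_{d\le x}1/d\le 1+\log x$, we obtain
\[
\Bigl|\Phi(x)-\tfrac{3}{\pi^2}x^2\Bigr|\le x\log x+c_1x
\]
with explicit $c_1$. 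The same expansion applied to $S(x)=\sum_d\mu(d)\,d\sum_{m\le x/d}m^2$ gives
\[
S(x)\ge \tfrac{2}{\pi^2}x^3-c_2x^2\log x-c_3x^2.
\]
Alternatively, derive this from the $\Phi$ bound via Abel summation, as in the main proof, keeping track of constants.

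\textbf{Inversion.} This is the step behind \cref{lemma:inversion_Olog_main_thm}. From $n\ge S(K-1)$ we get
\[
K\le (\pi^2 n/2)^{1/3}+C\log n+C',
\]
by a bootstrap. First use the crude bound $K\le c\,n^{1/3}$, which follows since $S(x)\gtrsim x^3$ for $x\ge$ some small explicit threshold, with small $K$ checked directly. This gives $\log K\le \tfrac13\log n+O(1)$. Then write
\[
\tfrac{2}{\pi^2}K^3\le n+c_2K^2\log K+\dots
\]
and apply $(A+B)^{1/3}\le A^{1/3}+B/(3A^{2/3})$. Squaring and multiplying by $3/\pi^2$ gives
\begin{align*}
\tfrac{3}{\pi^2}K^2 &\le \tfrac{3}{\pi^2}\Bigl(\tfrac{\pi^2}{2}\Bigr)^{2/3}n^{2/3}+\tfrac{6}{\pi^2}\Bigl(\tfrac{\pi^2}{2}\Bigr)^{1/3}n^{1/3}(C\log n+C')\\
&\quad+\tfrac{3}{\pi^2}(C\log n+C')^2 .
\end{align*}
The leading term is $3n^{2/3}/(2\pi)^{2/3}$. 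The $n^{1/3}\log n$, $n^{1/3}$, $\log^2 n$ terms have the stated shapes. The $\log n$ term and the constant $3/\pi^2$ come from the linear error $K\log K+c_1K$ in $\Phi(K)$ together with the cross terms.

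\textbf{Main obstacle.} The hard part is the bookkeeping: making every constant match exactly $11\pi^{2/3}/2^{1/3}$, $3\cdot2^{2/3}/\pi^{4/3}$, $39\pi^2/4$ and $11$. I would first fix the cleanest explicit Möbius-sum bounds; for instance, the factor $\pi^2$ in the $\log^2$ term suggests $C$ carries a factor $\pi^2$ from $(\pi^2/2)^{2/3}\cdot(\pi^2/2)^{-2/3}$-type cancellations. I would then absorb lower-order pieces generously, using $\log K\le \log n$ and $K\le n$ where needed to reach the stated coefficients. Small $n$, where $K<3$ or the bootstrap threshold fails, would be verified directly, since there the right-hand side easily exceeds $n\ge\hat k-1$.
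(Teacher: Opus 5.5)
\textbf{There is a genuine gap: the proposal never establishes the stated coefficients.} Your outline reproduces the $O(\cdot)$ argument of \cref{thm:main}, but this proposition's only new content is the six explicit coefficients. Every constant in your sketch is left unspecified: $c_1,c_2,c_3$, the crude $c$ in $K\le c\,n^{1/3}$, and $C,C'$. You defer matching them to ``bookkeeping,'' so the stated inequality is never proved.

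The route you chose also does not naturally land on the stated form:
\begin{itemize}
\item Your bootstrap through $(A+B)^{1/3}\le A^{1/3}+B/(3A^{2/3})$ makes the $\log n$ coefficient in your bound on $K$ depend on $c^2$. It also depends on the threshold needed for the crude bound.
\item The separate linear error $c_1K$ in your estimate for $\Phi$ would add a term of order $c_1(\pi^2/2)^{1/3}n^{1/3}$. But the stated $n^{1/3}$ coefficient $3\cdot 2^{2/3}/\pi^{4/3}=\frac{6}{\pi^2}(\pi^2/2)^{1/3}$ is entirely used up by a cross term of the main part $\frac{3}{\pi^2}K^2$.
\item Your guesses about where terms originate are off. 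The constant $3/\pi^2$ is $\frac{3}{\pi^2}\cdot 1^2$ from the main term, not from the linear error. The $\pi^2$ in the $\log^2 n$ coefficient comes from dividing by the leading coefficient $2/\pi^2$.
\end{itemize}

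\textbf{How the paper gets the exact constants.} It uses cruder but cleaner inequalities. First, it absorbs all linear terms up front into $|\Phi(K)-\frac{3}{\pi^2}K^2|\le 2K\log K$ for $K\ge 2$, using $\log K\ge\log 2$. Abel summation then gives $S(K)\ge\frac{2}{\pi^2}K^3-3K^2\log K$. Applying this at $K_0-1$, with $S(K_0-1)\le n$ and $\log(K_0-1)\le\log n$, yields
\[
(K_0-1)^3\le \tfrac{\pi^2}{2}\,n+\tfrac{3\pi^2}{2}\,(K_0-1)^2\log n .
\]
No bootstrap is needed. The elementary fact that $x^3\le a+bx^2$ with $a,b\ge 0$ implies $x\le a^{1/3}+b$ gives $K_0\le(\pi^2/2)^{1/3}n^{1/3}+\frac{3\pi^2}{2}\log n+1$. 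Substituting into $\hat k-1\le\Phi(K_0)\le\frac{3}{\pi^2}K_0^2+2K_0\log n$ and expanding produces exactly the stated terms, for example
\[
\tfrac{11\pi^{2/3}}{2^{1/3}} = 9\bigl(\tfrac{\pi^2}{2}\bigr)^{1/3}+2\bigl(\tfrac{\pi^2}{2}\bigr)^{1/3},\qquad \tfrac{39}{4}\pi^2 = \tfrac{27}{4}\pi^2+3\pi^2,\qquad 11 = 9+2 .
\]
The $n^{1/3}$, $\log n$ and constant terms are all generated by the $+1$.

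\textbf{What finishing your route would require.} You would have to carry every constant through explicitly and check that each resulting coefficient is dominated by the stated one. You would also need to cover whatever small-$n$ range escapes those estimates, for instance with the trivial bound $\hat k\le n$. As written, none of this is done.
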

\begin{proof}
    Under the same logic in \cref{thm:main}, we know that
    \[
\sup_{y\in \{0,1\}^n}\hat{k}(y) 
\leq \sup\left\{1+ \sum_{j=1}^{K-1} \varphi(j)+\ell : (K,\ell)\,\textnormal{ s.t. }\,K\geq 1, \, 0\leq \ell \leq \varphi(K), \, 1+\sum_{j=1}^{K-1} j\varphi(j) + \ell K \le n\right\}.
\]

Let $(K_0,\ell_0)$ refer to the pair which attains the supremum above.

When $K_0\geq2$, by \cref{lemma:euler_totient_moment} we know $\left| \sum_{j=1}^{K_0} \varphi(j) - \frac{3}{\pi^2}K_0^2\right|\leq  2K_0\log K_0 $. Using this after applying Abel's identity \citep[Theorem 4.2]{apostol2013introduction}, we get that 
\begin{align*}
 \sum_{j=1}^{K_0} j\varphi(j) &= K_0\sum_{j=1}^{K_0} \varphi(j) -\int_1^{K_0} \sum_{1 \leq j \leq t} \varphi(j) dt\\
 &\geq \frac{3}{\pi^2} K_0^3 - 2K_0^2\log K_0 - \int_2^{K_0} \left(\frac{3}{\pi^2}t^2 + 2 t \log t \right)dt -\int_1^{2} \sum_{1 \leq j \leq t} \varphi(j) dt\\
 &= \frac{3}{\pi^2} K_0^3 - 2K_0^2\log K_0 - \frac{1}{\pi^2} K_0^3-K_0^2\log K_0 + K_0^2/2 +\frac{8}{\pi^2} + 4\log 2 - 2 - 1\\
 &\geq  \frac{2}{\pi^2} K_0^3 - 3K_0^2 \log K_0.
\end{align*}
%https://www.wolframalpha.com/input?i=-%5Cint_2%5EK+%5Cleft%28%5Cfrac%7B3%7D%7B%5Cpi%5E2%7Dt%5E2+%2B+2+t+%5Clog+t+%5Cright%29dt&assumption=%7B%22F%22%2C+%22Integral%22%2C+%22rangestart%22%7D+-%3E%222%22&assumption=%7B%22C%22%2C+%22integral%22%7D+-%3E+%7B%22Calculator%22%2C+%22dflt%22%7D&assumption=%7B%22F%22%2C+%22Integral%22%2C+%22integrand%22%7D+-%3E%22%283%29%2F%28%5Cpi%5E2%29+t%5E2+%2B+2+t+%5Clog+t%22&assumption=%7B%22F%22%2C+%22Integral%22%2C+%22rangeend%22%7D+-%3E%22K%22

Since by construction $\sum_{j=1}^{K_0-1} j\varphi(j)\le n-1 \leq n$ and $\log (K_0-1) \leq \log n$, we know  
\begin{equation*}
    (K_0-1)^3\leq (\pi^2 /2 )n + (3\pi^2 /2 )(K_0-1)^2 \log n.
\end{equation*}

In general, by \cref{lemma:elementary_cubic_ineq}, we have $x^3 \leq a + bx^2 \implies x \leq a^{1/3} + b$ for $a,b\geq 0$, so 
\begin{equation*}
    K_0-1\leq (\pi^2 /2 )^{1/3}n^{1/3} + (3\pi^2 /2 )\log n.
\end{equation*}

Since $\sup_{y\in \{0,1\}^n}\hat{k}(y)-1 \leq \sum_{j=1}^{K_0} \varphi(j) \leq \frac{3}{\pi^2}K_0^2 + 2K_0 \log K_0$, we get
\begin{equation*}
    \sup_{y\in \{0,1\}^n}\hat{k}(y)-1 \leq \frac{3 n^{2 / 3}}{(2 \pi)^{2 / 3}}+\frac{11 \pi^{2 / 3} n^{1/3} \log (n)}{2^{1/3}}+\frac{3 \cdot 2^{2 / 3} n^{1/3}}{\pi^{4 / 3}}+\frac{39}{4} \pi^2 \log ^2(n)+11 \log (n)+\frac{3}{\pi^2}
\end{equation*}
by once again using $\log K_0 \leq \log n$.
% Relevant Wolfram-Alpha query: https://www.wolframalpha.com/input?i=%5Cfrac%7B3%7D%7B%5Cpi%5E2%7DK_0%5E2+%2B+2K_0+%5Clog+n+where+K_0%3D+%28%5Cpi%5E2+%2F2+%29%5E%7B1%2F3%7Dn%5E%7B1%2F3%7D+%2B+%283%5Cpi%5E2+%2F2+%29%5Clog+n%2B1
\end{proof}

\section{Other supporting lemmas}

We can provide a constant for the $O(K\log K)$ term in $\sum_{j=1}^K \varphi(j) = \frac{3}{\pi^2}K^2 + O(K\log K)$ \citep[Theorem 330]{hardy1979introduction} under the following lemma.
\begin{lemma} \label{lemma:euler_totient_moment}
    Let $\varphi$ be the Euler totient function. Then, $\forall K\geq 2$,
    \begin{equation*}
        \left| \sum_{j=1}^K \varphi(j) - \frac{3}{\pi^2}K^2\right|\leq  2K\log K 
    \end{equation*}
\end{lemma}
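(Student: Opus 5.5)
The argument is the classical Möbius inversion with explicit constants. Since $\varphi(j)=\sum_{d\mid j}\mu(d)\,j/d$, we write
\[
\Phi(K):=\sum_{j=1}^K\varphi(j)=\sum_{d\le K}\mu(d)\sum_{m\le K/d} m=\frac12\sum_{d\le K}\mu(d)\left\lfloor \tfrac{K}{d}\right\rfloor\left(\left\lfloor \tfrac{K}{d}\right\rfloor+1\right).
\]
Set $\lfloor x\rfloor = x-\{x\}$ with $0\le \{x\}<1$. Then $\lfloor x\rfloor(\lfloor x\rfloor+1)=x^2+x(1-2\{x\})+\{x\}^2-\{x\}$, and the remainder $r(x):=x(1-2\{x\})+\{x\}^2-\{x\}$ satisfies $|r(x)|\le x$. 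Indeed $|1-2\{x\}|\le 1$ and $\{x\}^2-\{x\}\in[-1/4,0]$, and one checks the combined bound directly for $x\ge1$. This gives
\[
\Phi(K)=\frac{K^2}{2}\sum_{d\le K}\frac{\mu(d)}{d^2}+E_1,\qquad |E_1|\le \frac12\sum_{d\le K}\frac{K}{d}\le \frac K2\,(1+\log K).
\]

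Next we replace the truncated Dirichlet series by its full value $\sum_{d\ge1}\mu(d)/d^2=6/\pi^2$. The tail obeys $\bigl|\sum_{d>K}\mu(d)/d^2\bigr|\le\sum_{d>K}d^{-2}<1/K$, so
\[
\frac{K^2}{2}\sum_{d\le K}\frac{\mu(d)}{d^2}=\frac{3}{\pi^2}K^2+E_2,\qquad |E_2|\le \frac K2.
\]
Combining the two errors,
\[
\Bigl|\Phi(K)-\tfrac{3}{\pi^2}K^2\Bigr|\le \tfrac K2\log K+K.
\]
This is at most $2K\log K$ as soon as $\tfrac32\log K\ge1$, i.e. $K\ge 2$, because $\tfrac32\log 2\approx1.04>1$. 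So the bound holds uniformly for all $K\ge2$.

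The only delicate step is the bookkeeping in the elementary bound $|r(x)|\le x$ together with the claim that the total constant beats $2K\log K$ already at $K=2$. If the crude bounds turn out slightly too loose at small $K$, I would keep the analytic estimate for $K\ge K^*$ (say $K^*=10$) and verify $K=2,\dots,K^*$ directly from the values $\Phi(2)=2$, $\Phi(3)=4$, $\Phi(4)=6$, and so on. At $K=2$ the deviation is $|2-12/\pi^2|\approx0.78$, well below $4\log2\approx2.77$, so the small cases are comfortable.
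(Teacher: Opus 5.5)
Your proof is correct and follows the same route as the paper. Both use Möbius inversion to get $\frac12\sum_{d\le K}\mu(d)\lfloor K/d\rfloor(\lfloor K/d\rfloor+1)$, bound the tail of $\sum\mu(d)/d^2$ by $\sum_{d>K}d^{-2}<1/K$, and control the floor remainders with a harmonic sum. The only difference is cosmetic: you bound the combined remainder by $|r(x)|\le x$ (the lower half is $(1-\{x\})(2x-\{x\})\ge 0$), whereas the paper bounds the $K/j$ and $r_j^2-r_j$ pieces separately; this gives the cleaner $\frac K2\log K+K$ and makes your small-$K$ fallback unnecessary.
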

\begin{proof}
    Following \citet[Theorem 330]{hardy1979introduction}, we know that
    \begin{equation*}
        \sum_{j=1}^K \varphi(j) = \frac{1}{2} \sum_{j=1}^K \mu(j)\left(\left\lfloor \frac{K}{j} \right\rfloor^2 + \left\lfloor \frac{K}{j} \right\rfloor\right),
    \end{equation*}
    where $\mu: \mathbb N \to \{-1,0,1 \}$ is the Möbius function.

    Defining $r_j := K/j - \left\lfloor \frac{K}{j} \right\rfloor$, we get
    \begin{equation*}
        \frac{1}{2}\sum_{j=1}^K \mu(j)\left(\left( \frac{K}{j} -r_j\right)^2 + \frac{K}{j} -r_j\right) = \frac{1}{2}\sum_{j=1}^K \mu(j)\left(\frac{K^2}{j^2} -2r_j\frac{K}{j} + r_j^2 + \frac{K}{j} -r_j\right).
    \end{equation*}
    With the first term, we see that
    \begin{equation*}
        \frac{K^2}{2}\sum_{j=1}^K \mu(j)\frac{1}{j^2} = \frac{K^2}{2}\left(\frac{1}{\zeta(2)} - \sum_{j=K+1}^\infty \frac{\mu(j)}{j^2}\right) = \frac{3}{\pi^2}K^2 - \frac{K^2}{2} \sum_{j=K+1}^\infty \frac{\mu(j)}{j^2}, 
    \end{equation*}
    where $\zeta(\cdot)$ is the Riemann zeta function.

    Thus,
    \begin{equation*}
        \left| \sum_{j=1}^K \varphi(j) - \frac{3}{\pi^2}K^2\right| \leq \left|- \frac{K^2}{2} \sum_{j=K+1}^\infty \frac{\mu(j)}{j^2} \right| +  \left|\frac{K}{2} \sum_{j=1}^K\mu(j) \frac{1-2r_j}{j} \right| + \left|\frac{1}{2} \sum_{j=1}^K\mu(j)(r_j^2 - r_j)\right|.
    \end{equation*}
    
    Dealing with each term, we first see that
    \begin{equation*}
        \left|- \frac{K^2}{2} \sum_{j=K+1}^\infty \frac{\mu(j)}{j^2} \right| \leq \frac{K^2}{2} \int_{K}^\infty \frac{1}{x^2}dx = \frac{K}{2}
    \end{equation*}
    because $|\mu(j)| \leq 1$.

    Since $r_j \in [0,1)$ and $K\geq 2$, we know that
    \begin{equation*}
        \left|\frac{K}{2} \sum_{j=1}^K\mu(j) \frac{1-2r_j}{j} \right| \leq \frac{K}{2} \sum_{j=1}^K \frac{1}{j} \leq \frac{K}{2}(\log K + \gamma + \frac{1}{2K}).
    \end{equation*}
    where $\gamma$ is the Euler-Mascheroni constant and $\gamma  < 0.57722$.

    Similarly,
    \begin{equation*}
        \left|\frac{1}{2} \sum_{j=1}^K\mu(j)(r_j^2 - r_j)\right| \leq \frac{1}{2} \sum_{j=1}^K \frac{1}{4} = \frac{K}{8}.
    \end{equation*}

    Since $K\geq2$ implies $\log K \geq \log 2$, we have that
    \begin{align*}
        K\left(\frac{1}{2} + 0.5\log K + 0.5\gamma + \frac{1}{4K} + \frac{1}{8}\right) &\leq K \log K \left(\frac{1}{2 \log 2} + 0.5 + \frac{0.5\gamma}{\log 2} + \frac{1}{4K\log 2} + \frac{1}{8\log 2}\right)\\
        &\leq K \log K \left(\frac{1}{2 \log 2} + 0.5 + \frac{0.5\gamma}{\log 2} + \frac{1}{4\log 2}\right)\\
        &\leq 1.999K\log K
    \end{align*}
\end{proof}

\begin{lemma} \label{lemma:elementary_cubic_ineq}
    If $x^3 \leq a + bx^2$ with $a,b\geq 0$, then
    \begin{equation*}
        x \leq a^{1/3}+b.
    \end{equation*}
\end{lemma}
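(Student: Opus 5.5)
Since $x^3 \le a + bx^2$ with $a,b \ge 0$ involves no sign restriction on $x$, I will dispose of the trivial case first and then argue by contradiction on the positive side.

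\emph{Trivial case.} If $x \le b$, then $x \le a^{1/3} + b$ immediately, because $a^{1/3} \ge 0$. This also covers every $x \le 0$, since $b \ge 0$.

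\emph{Main case.} Assume $x > b \ge 0$, so in particular $x > 0$. Rearrange the hypothesis as
\[
x^2(x-b) \le a .
\]
Put $t := x - b > 0$. Since $b \ge 0$, we have $x \ge t$, hence $x^2 \ge t^2$. Substituting gives
\[
t^3 \le x^2 t \le a ,
\]
so $t \le a^{1/3}$. Undoing the substitution, $x = t + b \le a^{1/3} + b$, which is the claim.

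Alternatively, one can argue by contradiction. Suppose $x > a^{1/3} + b =: c \ge 0$. The function $g(s) = s^2(s-b)$ is increasing on $s \ge b$, because $g'(s) = 3s^2 - 2bs = s(3s - 2b) \ge 0$ there. Therefore
\[
g(x) > g(c) = c^2 a^{1/3} \ge a^{2/3} a^{1/3} = a ,
\]
which contradicts the hypothesis. The edge case $a = 0$ needs a moment's care: then $g(x) > 0 = a$ still holds whenever $x > b$, so the contradiction survives.

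There is no real obstacle here. The only subtlety is to handle negative $x$ and the case $x \le b$ separately, so that the step $x^2 \ge (x-b)^2$ is applied only when $x - b > 0$ and both quantities are nonnegative.
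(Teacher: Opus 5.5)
Your proof is correct and is essentially the paper's argument: the paper assumes $x > a^{1/3}+b$ (which forces $x>0$) and gets $x^3 - bx^2 = x^2(x-b) > x^2 a^{1/3} \ge a$, a contradiction. This is your second argument without the monotonicity step, and your direct version (using $x^2 \ge (x-b)^2$ instead of $x^2 \ge a^{2/3}$) is an equivalent rephrasing, with the separate case $x \le b$ harmless but unnecessary under the contradiction hypothesis.
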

\begin{proof}
    Suppose $x > a^{1/3}+b$. Then,
    \begin{equation*}
        x^3 - bx^2 = x^2(x-b) > x^2 a^{1/3} \geq \left(a^{1/3}\right)^2 a^{1/3} = a.
    \end{equation*}
    In short, $x^3 > a+bx^2$. This is a contradiction.
\end{proof}

\begin{lemma} \label{lemma:elementary_cubic_ineq_for_lower_bound}
    Let $a,b,c>0$. Let $x_0 := (c/a)^{1/3}$. Suppose $x> 0$ satisfies
    \begin{equation*}
        c \leq ax^3 + bx^2.
    \end{equation*}
    Then,
    \begin{equation*}
        x \geq x_0 - b/a.
    \end{equation*}
\end{lemma}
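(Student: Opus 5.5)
We argue by contradiction, in the same way as \cref{lemma:elementary_cubic_ineq}. Since $x_0=(c/a)^{1/3}$, we have $ax_0^3=c$, so the hypothesis can be rewritten as
\[
a x_0^3 \leq a x^3 + b x^2 .
\]
Now suppose, for contradiction, that $x < x_0 - b/a$. Since $x>0$, this forces $x_0 > b/a \geq 0$. Put $y := x + b/a$, so that $0 < y < x_0$ and hence $ay^3 < ax_0^3 = c$.

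The key step is to show that $ax^3 + bx^2 \leq a y^3$, i.e.\ that the quadratic correction $bx^2$ is absorbed by shifting $x$ by $b/a$ inside the cube. Expand
\[
a y^3 = a\left(x+\tfrac{b}{a}\right)^3 = a x^3 + 3 b x^2 + \tfrac{3b^2}{a} x + \tfrac{b^3}{a^2}.
\]
Every term here is nonnegative, because $x>0$ and $a,b>0$. Discarding the last two terms and two of the three copies of $bx^2$ therefore gives $ay^3 \geq ax^3 + bx^2$.

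Chaining these inequalities yields
\[
c \leq ax^3 + bx^2 \leq ay^3 < ax_0^3 = c,
\]
which is a contradiction. Hence $x \geq x_0 - b/a$.

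The only mildly delicate point is the sign handling. We need $x>0$ so that the expansion has only nonnegative terms, and we need $y>0$ so that $t\mapsto t^3$ is strictly increasing on the relevant range. Both hold here, since $x>0$ and $b/a>0$.

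In the paper, the lemma will be applied with $a=\frac{2}{\pi^2}$ and $b$ equal to a multiple of $\log n$ (coming from the lower bound $\sum_{j\le K_0} j\varphi(j)\ge \frac{2}{\pi^2}K_0^3 - 3K_0^2\log K_0$, used in the reverse direction). This gives $K_0 \geq (\pi^2 n/2)^{1/3} - O(\log n)$ for the lower bound in \cref{thm:main}.
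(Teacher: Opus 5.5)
Your proof is correct, but it uses a different key inequality from the paper's. The paper splits into the cases $x\ge x_0$ (trivial) and $x<x_0$. In the latter case it uses the difference-of-cubes factorization $a(x_0^3-x^3)=a(x_0-x)(x_0^2+x_0x+x^2)\ge a(x_0-x)x^2$. It combines this with the hypothesis in the form $a(x_0^3-x^3)\le bx^2$ and divides by $x^2>0$ to get $x_0-x\le b/a$.

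You instead dominate the perturbed cubic by a pure cube, $ax^3+bx^2\le a(x+b/a)^3$, and then invert $t\mapsto t^3$. Your route needs no case split and no division. It also needs no contradiction: $ax_0^3=c\le ax^3+bx^2\le a(x+b/a)^3$ gives $x_0\le x+b/a$ at once, because $t\mapsto t^3$ is strictly increasing on all of $\mathbb{R}$. So your remark that $y>0$ is needed for monotonicity is superfluous; $x>0$ is used only to make the discarded terms nonnegative. The paper's factorization instead estimates the gap $x_0-x$ directly, at the cost of the case split.

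Both arguments give away the same factor of $3$. You discard two of the three copies of $bx^2$, and the paper bounds $x_0^2+x_0x+x^2$ below by $x^2$ rather than $3x^2$. Shifting by $b/(3a)$, or using $x_0^2+x_0x+x^2\ge 3x^2$, yields $x\ge x_0-\frac{b}{3a}$ either way; this is immaterial for the $O(\log n)$ application.

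One small correction to your closing remark concerns where the hypothesis comes from when the lemma is applied. The paper invokes this lemma inside \cref{lemma:inversion_Olog_main_thm} with $c=n$, $a=d$, $b=B\log n$. There the hypothesis $n\le dK^3+BK^2\log n$ comes from combining $n\le 1+\sum_{j\le K_0}j\varphi(j)$ with an \emph{upper} estimate $\sum_{j\le K_0}j\varphi(j)\le\frac{2}{\pi^2}K_0^3+O(K_0^2\log K_0)$. The lower bound $\sum_{j\le K_0}j\varphi(j)\ge\frac{2}{\pi^2}K_0^3-3K_0^2\log K_0$ feeds the opposite direction, an upper bound on $K_0$, via \cref{lemma:elementary_cubic_ineq}.
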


\begin{proof}
    If $x \geq x_0$, then the result holds trivially.

    If $x < x_0$, then
    \begin{equation*}
        a(x_0^3-x^3) = a(x_0 - x)(x_0^2 + x_0x + x^2) \geq a(x_0-x)x^2.
    \end{equation*}

    Observe that, since $ax_0^3 = c$
    \begin{equation*}
        ax^3 = ax_0^3 - a(x_0^3 - x^3) \leq c - a(x_0-x)x^2 \leq ax^3 + bx^2 - a(x_0-x)x^2,
    \end{equation*}
    where the last inequality was by supposition.

    This in turn implies
    \begin{equation*}
        a(x_0 - x)x^2 \leq bx^2,
    \end{equation*}
    so in conclusion $x_0 - x \leq b/a$.
\end{proof}

\begin{lemma} \label{lemma:inversion_Olog_main_thm}    
    Suppose $1 \leq K \leq n$, $d>0$, and 
    \begin{equation*}
        dK^3 = n + O(K^2 \log  K).
    \end{equation*}
    Then,
    \begin{equation*}
        d^{1/3} K - n^{1/3} = O(\log n). 
    \end{equation*}
\end{lemma}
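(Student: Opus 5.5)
Write the hypothesis as $dK^3 = n + E$ with $|E| \le C K^2 \log K$ for a constant $C$ (the big-O is understood with an absolute constant, for $K \ge 1$). The aim is to compare $d^{1/3}K$ with $n^{1/3}$ by factoring a difference of cubes. Set $x := d^{1/3}K > 0$ and $y := n^{1/3} \ge 1$. Then
\begin{equation*}
x^3 - y^3 = E, \qquad (x-y)(x^2+xy+y^2) = E .
\end{equation*}

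The key step is to bound the quadratic factor from below by something of order $K^2$, so that dividing removes the $K^2$ from $E$. Since $x^2+xy+y^2 \ge x^2 = d^{2/3}K^2$, we get
\begin{equation*}
|x - y| \le \frac{C K^2 \log K}{d^{2/3} K^2} = C d^{-2/3} \log K \le C d^{-2/3}\log n,
\end{equation*}
using $K \le n$ in the last step. This gives $d^{1/3}K - n^{1/3} = O(\log n)$, with implied constant $C d^{-2/3}$. Here $d$ is a fixed constant (in the application, $d = 2/\pi^2$), so this is harmless.

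For comparison, the paper's own tools would also give a one-sided version. \cref{lemma:elementary_cubic_ineq} applied to $K^3 \le n/d + (C/d)K^2\log n$ yields an upper bound. \cref{lemma:elementary_cubic_ineq_for_lower_bound} applied to $n \le dK^3 + (C\log n)K^2$ yields the matching lower bound $K \ge (n/d)^{1/3} - C\log n/d$. Multiplying both by $d^{1/3}$ gives the same conclusion. I would present the difference-of-cubes argument as the main proof and mention the two lemmas as an alternative route that gives explicit constants.

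The only mild obstacle is the interpretation of the $O(\cdot)$ when $K$ is small. For $K = 1$, $\log K = 0$, which would force $E = 0$. So I would read the hypothesis as $|E| \le C K^2 \max(\log K, 1)$, which changes nothing above except replacing $\log n$ by $\max(\log n, 1)$. It is also worth noting that the hypothesis $K \le n$ is used only to convert $\log K$ into $\log n$.
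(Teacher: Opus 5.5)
Your proof is correct and is essentially the paper's argument. The paper obtains the upper bound from \cref{lemma:elementary_cubic_ineq} and the lower bound from \cref{lemma:elementary_cubic_ineq_for_lower_bound}, and the proof of the latter is exactly your factorization $x_0^3-x^3=(x_0-x)(x_0^2+x_0x+x^2)\ge (x_0-x)x^2$; you simply apply that bound to both signs at once, which gives the same constants $Cd^{-2/3}$ (and $Bd^{-2/3}$) as the paper's bounds after multiplying by $d^{1/3}$.
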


\begin{proof}
    First, we know $\exists C >0$ such that
    \begin{equation*}
        K^3 \leq  \frac{1}{d}n + \frac{C}{d}K^2 \log K \leq \frac{1}{d}n + \frac{C}{d}K^2 \log n.
    \end{equation*}
    Then, by \cref{lemma:elementary_cubic_ineq},
    \begin{equation*}
        K \leq (n/d)^{1/3} + \frac{C}{d}\log n.
    \end{equation*}

    Second, we know $\exists B > 0$ such that
    \begin{equation*}
        dK^3 \geq  n - BK^2 \log K \geq n - BK^2 \log n.
    \end{equation*}
    Then, by \cref{lemma:elementary_cubic_ineq_for_lower_bound}, 
    \begin{equation*}
        K \geq (n/d)^{1/3} - \frac{B}d\log n
    \end{equation*}
\end{proof}

%% file: src/error_term_certificate.tex
\section{Numerical bound on the error term $\delta_n$}

Define 
\begin{align*}
	f(n) &= \sup\left\{1+\sum_{j=1}^{K-1}\varphi(j) + \ell : (\ell, K) \text{ s.t. } K\ge 1, ~0\le\ell\le \varphi(K),~ 1+\sum_{j=1}^{K-1}j\varphi(j) + \ell K\le n\right\}.
\end{align*}
In this section, we bound the difference
\begin{align}
	\delta_n &= f(n) - cn^{2/3}, \quad \text{ where }\quad c = \frac{3}{(4\pi^2)^{1/3}}.
\end{align}

\begin{proposition}\label{prop:error_term_certificate}
	$\delta_n \le \alpha n^{1/6}$ for all $n\le 2\cdot 10^{23}$, where $\alpha=0.63$.
\end{proposition}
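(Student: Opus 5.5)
This is a finite computation over $n\le 2\cdot 10^{23}$, so the plan is a computer-assisted certificate. Brute force over $n$ is impossible, so we reduce the verification to finitely many checkpoints indexed by $K$ and handle the interior of each piece analytically.

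\textbf{Structure of $f$.} Write $\Phi(K)=\sum_{j=1}^{K}\varphi(j)$ and $S(K)=\sum_{j=1}^{K} j\varphi(j)$. For $n$ in the range $1+S(K-1)\le n\le S(K)$, the supremum is attained at $K$ with $\ell=\lfloor (n-1-S(K-1))/K\rfloor$. On this range $f$ is a nondecreasing step function:
\[
f(n)=1+\Phi(K-1)+\Bigl\lfloor \tfrac{n-1-S(K-1)}{K}\Bigr\rfloor \le 1+\Phi(K-1)+\tfrac{n-1-S(K-1)}{K}.
\]
The map $g(n)=cn^{2/3}+\alpha n^{1/6}$ is concave in $n$. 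The linear majorant of $f$ therefore lies below $g$ on the whole interval once it does so at the two endpoints $n=1+S(K-1)$ and $n=S(K)$, since a linear function lying below a concave one at both ends lies below it in between. Hence it suffices to verify, for every $K$ up to the first $K^\ast$ with $S(K^\ast)\ge 2\cdot 10^{23}$, that
\[
1+\Phi(K-1)\le g\bigl(1+S(K-1)\bigr) \quad\text{and}\quad 1+\Phi(K)\le g\bigl(S(K)\bigr).
\]
The second inequality uses $f(S(K))\le 1+\Phi(K)$. Because $S(K)\approx \tfrac{2}{\pi^2}K^3$, we get $K^\ast\approx (\pi^2\cdot 10^{23})^{1/3}\approx 10^{8}$. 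That is about $10^8$ checks, which is easily feasible.

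\textbf{Computation.} Sieve $\varphi(j)$ for $j\le K^\ast$, about $10^8$ values, accumulating $\Phi$ and $S$ in exact 128-bit integers. Note that $S$ reaches about $2\cdot 10^{23}<2^{128}$. Then evaluate both endpoint inequalities. The main obstacle is numerical rigor. The margin $\alpha n^{1/6}$ is about $4\cdot 10^3$, while $cn^{2/3}$ is about $10^{15}$, so double precision cannot resolve the comparison. To avoid irrational arithmetic, I would rewrite each test as $\Phi$-quantity $\le g(m)$ and check it with rigorous interval arithmetic, or with long double plus a certified error bound on $m^{2/3}$ and $m^{1/6}$. Alternatively, isolate the integer $P=\Phi+1$ and check
\[
P-\alpha m^{1/6}\le c\,m^{2/3}
\]
by cubing only when the left side is positive, using exact integers together with an interval enclosure of $c^3=27/(4\pi^2)$. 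Either approach yields a certificate.

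Small $n$, say $n<100$, are checked directly by enumeration. This also covers the cases where the concavity reduction is trivially tight, such as $K=1$. The remaining risk is whether $\alpha=0.63$ indeed suffices near the worst $K$. I expect the maximizer of $\delta_n/n^{1/6}$ to occur at a left endpoint $n=1+S(K-1)$ for moderate $K$. If so, the run reports that maximum, and the reported value should confirm the constant.
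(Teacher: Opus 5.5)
Your plan is essentially the paper's argument: bound $f$ on each block range by the linear interpolant between consecutive breakpoints, use concavity of $cn^{2/3}+\alpha n^{1/6}$, and verify the finitely many breakpoints with $K\le 10^8$ by computer. Your exact 128-bit accumulation and certified evaluation of $m^{2/3}$ are more careful than the paper's plain floating-point script.

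One small wrinkle comes from cutting each piece at $S(K)$ rather than letting adjacent pieces share the breakpoint $n_K=1+S(K)$. At $n_K$ the interpolant equals $1+\Phi(K)$ exactly, so the paper needs only the single test $1+\Phi(K)\le g(1+S(K))$ per $K$, and that test passes at every $K$. Your right-endpoint test $1+\Phi(K)\le g(S(K))$ overstates the majorant's true value $1+\Phi(K)-1/K$ there, and it actually fails for every $K\le 5$ (e.g.\ $5>g(9)\approx 4.72$ at $K=3$). This is harmless only because those $n\le 37$ lie inside your direct enumeration of $n<100$.
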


\begin{proof} In the definition of $f(n)$, $\ell$ is restricted to integers, but if we allow it to be arbitrary scalars in the interval $[0, \varphi(K)]$, we can write the supremum more explicitly. Define
\[
 n_K=1+\sum_{j\leq K}j\varphi(j),\qquad
 B_K=1+\sum_{j\leq K}\varphi(j),\qquad R_K=B_K-cn_K^{2/3}.
\]
and let $K=K_n$ be such that $n_{K-1}\le n\le n_K$. Let $\ell$ satisfy $n_{K-1} + \ell K = n$. Then
\[
f(n)\le B_{K-1}+\ell.
\]
Setting \(\lambda_n=(n-n_{K-1})/(n_K-n_{K-1})\), we have
\begin{align}\label{eq:interpolation-f}
 f(n)\le(1-\lambda_n)B_{K-1}+\lambda_n B_K.
\end{align}
Since $n\mapsto n^{2/3}$ is concave and $n = (1-\lambda_n)n_{K-1}+\lambda_n n_K$, we have
\[
\delta_n \le (1-\lambda_n)B_{K-1}+\lambda_n B_K - cn^{2/3}\le (1-\lambda_n)R_{K-1}+\lambda_n R_K.
\]
Numerically, we check that $R_K\le \alpha n_K^{1/6}$ for all $K\le 10^8$ (see the Python code that follows the proof). Since $n\mapsto \alpha n^{1/6}$ is concave and $n = (1-\lambda_n)n_{K-1}+\lambda_n n_K$, we have
\[
\delta_n 
\le (1-\lambda_n)\alpha n_{K-1}^{1/6}+\lambda_n \alpha n_K^{1/6}
\le \alpha n^{1/6}.
\]
\end{proof}

\begin{verbatim}
import numpy as np

def totients(K):
    """Return totient sequence phi(0)=0, phi(1), ..., phi(K)."""
    phi = np.arange(K + 1, dtype=np.uint64)

    for p in range(2, K + 1):
        if phi[p] == p:
            phi[p::p] -= phi[p::p] // p

    return phi

K = 100_000_000
Ks = np.arange(K + 1)

phi = totients(K)

nK = 1+np.cumsum(Ks*phi)
BK = 1+np.cumsum(phi)
c = 3/(4*np.pi**2)**(1/3)
RK = BK - c*nK**(2/3)

print(np.log10(nK[-1]), np.max(RK/nK**(1/6)))
\end{verbatim}

%% file: main.bbl
\begin{thebibliography}{}

\bibitem[Allen et~al., 2025]{allen2025sample}
Allen, S., Gavrilopoulos, G., Henzi, A., Kleger, G.-R., and Ziegel, J. (2025).
\newblock In-sample calibration yields conformal calibration guarantees.
\newblock {\em arXiv preprint arXiv:2503.03841}.

\bibitem[Apostol, 2013]{apostol2013introduction}
Apostol, T.~M. (2013).
\newblock {\em Introduction to analytic number theory}.
\newblock Springer Science \& Business Media.

\bibitem[Berta et~al., 2024]{berta2024classifier}
Berta, E., Bach, F., and Jordan, M. (2024).
\newblock Classifier calibration with roc-regularized isotonic regression.
\newblock In {\em International Conference on Artificial Intelligence and Statistics}, pages 1972--1980. PMLR.

\bibitem[B{\l}asiok et~al., 2023]{blasiok2023unifying}
B{\l}asiok, J., Gopalan, P., Hu, L., and Nakkiran, P. (2023).
\newblock A unifying theory of distance from calibration.
\newblock In {\em Proceedings of the 55th Annual ACM Symposium on Theory of Computing}, pages 1727--1740.

\bibitem[Dimitriadis et~al., 2023]{dimitriadis2023honest}
Dimitriadis, T., D{\"u}mbgen, L., Henzi, A., Puke, M., and Ziegel, J. (2023).
\newblock Honest calibration assessment for binary outcome predictions.
\newblock {\em Biometrika}, 110(3):663--680.

\bibitem[Fris{\'e}n, 1986]{frisen1986unimodal}
Fris{\'e}n, M. (1986).
\newblock Unimodal regression.
\newblock {\em Journal of the Royal Statistical Society Series D: The Statistician}, 35(4):479--485.

\bibitem[Ghosh et~al., 2004]{repec:bep:mchbio:1037}
Ghosh, D., Banerjee, M., and Biswas, P. (2004).
\newblock Binary isotonic regression procedures, with application to cancer biomarkers.
\newblock The University of Michigan Department of Biostatistics Working Paper Series 1037, Berkeley Electronic Press.

\bibitem[Groeneboom, 2011]{groeneboom2011vertices}
Groeneboom, P. (2011).
\newblock Vertices of the least concave majorant of brownian motion with parabolic drift.

\bibitem[Gupta et~al., 2020]{gupta2020distribution}
Gupta, C., Podkopaev, A., and Ramdas, A. (2020).
\newblock Distribution-free binary classification: prediction sets, confidence intervals and calibration.
\newblock {\em Advances in Neural Information Processing Systems}, 33:3711--3723.

\bibitem[Hardy and Wright, 1979]{hardy1979introduction}
Hardy, G.~H. and Wright, E.~M. (1979).
\newblock {\em An introduction to the theory of numbers}.
\newblock Oxford university press.

\bibitem[Hu and Wu, 2024]{hu2024calibrationerrordecisionmaking}
Hu, L. and Wu, Y. (2024).
\newblock Calibration error for decision making.

\bibitem[Jarn{\'\i}k, 1926]{jarnik1926gitterpunkte}
Jarn{\'\i}k, V. (1926).
\newblock {\"U}ber die gitterpunkte auf konvexen kurven.

\bibitem[Kleinberg et~al., 2023]{kleinberg2023u}
Kleinberg, B., Leme, R.~P., Schneider, J., and Teng, Y. (2023).
\newblock U-calibration: Forecasting for an unknown agent.
\newblock In {\em The Thirty Sixth Annual Conference on Learning Theory}, pages 5143--5145. PMLR.

\bibitem[Lee, 1983]{lee1983min}
Lee, C.-I.~C. (1983).
\newblock The min-max algorithm and isotonic regression.
\newblock {\em The Annals of Statistics}, pages 467--477.

\bibitem[Meyer and Woodroofe, 2000]{meyer2000degrees}
Meyer, M. and Woodroofe, M. (2000).
\newblock On the degrees of freedom in shape-restricted regression.
\newblock {\em The annals of Statistics}, 28(4):1083--1104.

\bibitem[Platt et~al., 1999]{platt1999probabilistic}
Platt, J. et~al. (1999).
\newblock Probabilistic outputs for support vector machines and comparisons to regularized likelihood methods.
\newblock {\em Advances in large margin classifiers}, 10(3):61--74.

\bibitem[Robertson et~al., 1988]{RWD88}
Robertson, T., Wright, F.~T., and Dykstra, R.~L. (1988).
\newblock {\em Order Restricted Statistical Inference}.
\newblock Wiley, Chichester.

\bibitem[Rossellini et~al., 2025]{rossellini2025can}
Rossellini, R., Soloff, J.~A., Barber, R.~F., Ren, Z., and Willett, R. (2025).
\newblock Can a calibration metric be both testable and actionable?
\newblock {\em arXiv preprint arXiv:2502.19851}.

\bibitem[Soundararajan, 2009]{Soundararajan+2009+141+152}
Soundararajan, K. (2009).
\newblock Partial sums of the möbius function.
\newblock {\em Journal für die reine und angewandte Mathematik}, 2009(631):141--152.

\bibitem[Sparre~Andersen, 1954]{andersen1954fluctuations}
Sparre~Andersen, E. (1954).
\newblock On the fluctuations of sums of random variables ii.
\newblock {\em Mathematica Scandinavica}, pages 195--223.

\bibitem[Van Der~Laan et~al., 2023]{van2023causal}
Van Der~Laan, L., Ulloa-P{\'e}rez, E., Carone, M., and Luedtke, A. (2023).
\newblock Causal isotonic calibration for heterogeneous treatment effects.
\newblock In {\em International Conference on Machine Learning}, pages 34831--34854. PMLR.

\bibitem[Zadrozny and Elkan, 2001]{zadrozny2001obtaining}
Zadrozny, B. and Elkan, C. (2001).
\newblock Obtaining calibrated probability estimates from decision trees and naive bayesian classifiers.
\newblock In {\em Icml}, volume~1, pages 609--616.

\bibitem[Zadrozny and Elkan, 2002]{zadrozny2002transforming}
Zadrozny, B. and Elkan, C. (2002).
\newblock Transforming classifier scores into accurate multiclass probability estimates.
\newblock In {\em Proceedings of the eighth ACM SIGKDD international conference on Knowledge discovery and data mining}, pages 694--699.

\end{thebibliography}
